\theoremstyle{plain}
\newtheorem{theorem}{Theorem}[section]
\newtheorem{lemma}[theorem]{Lemma}
\newtheorem{assumption}[theorem]{Assumption}
\newtheorem{example}[theorem]{Example}
\newcommand{\methodname}{\textsc{SEAL}}
\newcommand{\attackname}{\textsc{Cat Attack}}
\newcommand\gensimi{\stackrel{\mathclap{\normalfont\mbox{$s_i$}}}{\sim}}
\newcommand\gensimvec{\stackrel{\mathclap{\normalfont\mbox{$s$}}}{\sim}}
\newcommand\sign{\textnormal{sign}}
\definecolor{iccvblue}{rgb}{0.21,0.49,0.74}
\title{SEAL: Semantic Aware Image Watermarking}
\author{Kasra Arabi, R. Teal Witter, Chinmay Hegde, Niv Cohen \\
New York University}
\begin{document}
\maketitle
\begin{abstract}

Generative models have rapidly evolved to generate realistic outputs. However, their synthetic outputs increasingly challenge the clear distinction between natural and AI-generated content, necessitating robust watermarking techniques to mark synthetic images.
Watermarks are typically expected to preserve the integrity of the target image, withstand removal attempts, and prevent unauthorized insertion of the watermark pattern onto unrelated images.
To address this need, recent methods embed persistent watermarks into images produced by diffusion models using the initial noise of the diffusion process.
Yet, to do so, they either distort the distribution of generated images or require searching a large dictionary of candidate noise patterns for detection.

In this paper, we propose a novel watermarking method that embeds semantic information about the generated image into the noise pattern, enabling a distortion-free watermark that can be verified without requiring a database of key patterns. Instead, the key pattern can be inferred from the semantic embedding of the image using locality-sensitive hashing.
Furthermore, conditioning the watermark detection on the original image content improves its robustness against forgery attacks. To demonstrate that, we consider two largely overlooked attack strategies: (i) an attacker extracting the initial noise and generating a novel image with the same pattern; (ii) an attacker inserting an unrelated (potentially harmful) object into a watermarked image, while preserving the watermark. We empirically validate our method's increased robustness to these attacks. 
Taken together, our results suggest that content-aware watermarks can mitigate risks arising from image-generative models.
Our code is available at \url{https://github.com/Kasraarabi/SEAL}.
\end{abstract}    
\section{Introduction}
\label{sec:intro}
The growing capabilities of generative models pose risks to society, including misleading public opinion, violating privacy or intellectual property, and fabricating legal evidence \cite{jaidka2025misinformation, solaiman2023evaluating, bird2023typology}. Watermarking methods aim to mitigate such risks by allowing the detection of synthetically generated content. 

Yet, many conventional watermarking techniques lack robustness against adversaries who attempt to remove them using regeneration attacks powered by recent generative models~\cite{fernandez2023stable, zhao2025invisible, an2024waves}. To address this, new watermarking techniques leveraging similar advances in generative models offer an increased robustness against such attacks \cite{arabi2024hidden, wen2023tree, yang2024gaussian}. Namely, these methods embed a watermarking pattern in the initial noise used by a diffusion model. These patterns have been shown to be more robust against existing removal attacks.

\begin{figure}[t]
    \centering
    \includegraphics[width=\columnwidth]{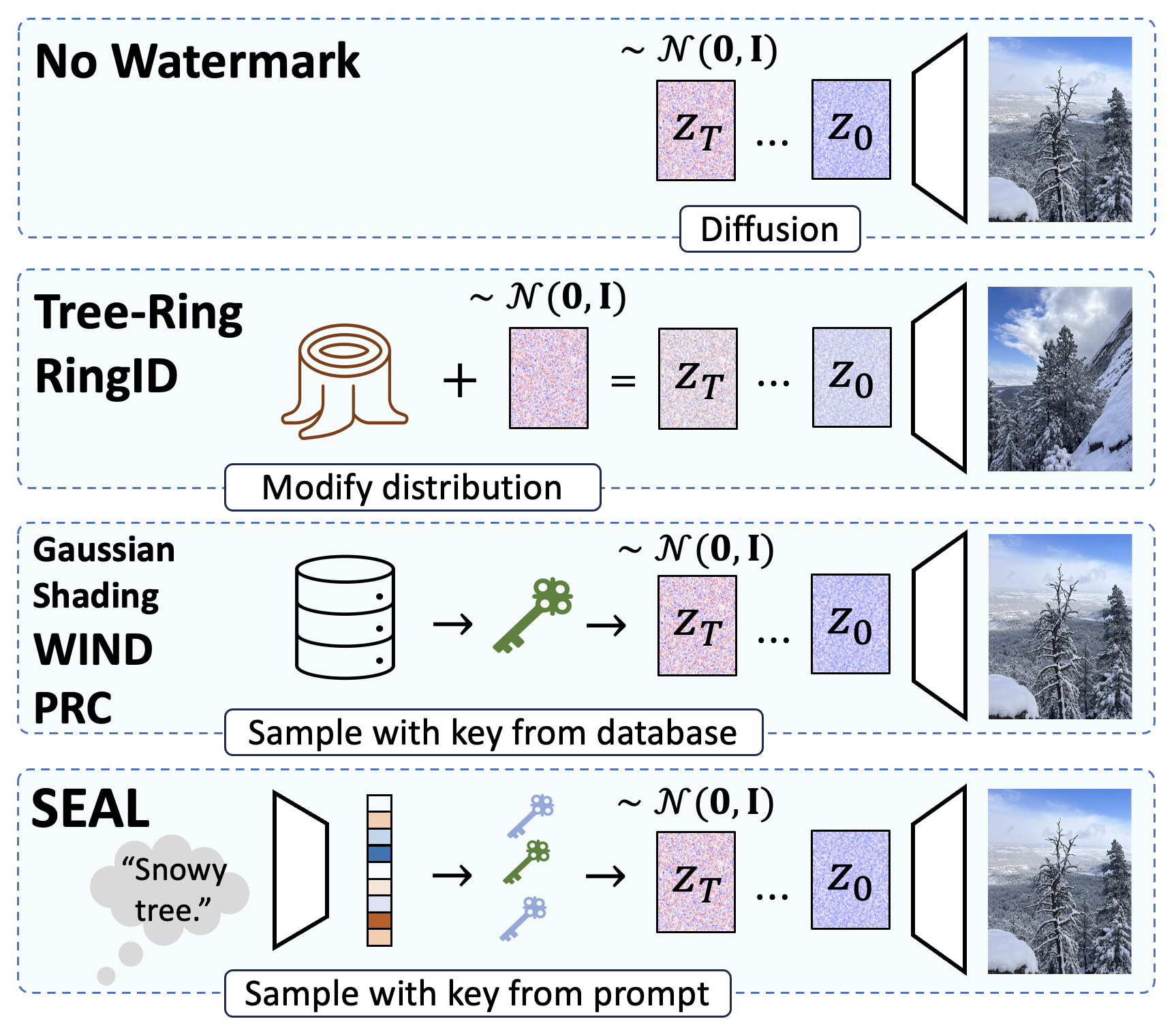}
    \caption{\textit{\textbf{Illustration of different watermarking frameworks using the initial noise of diffusion models.
No Watermark:} A diffusion model maps pure Gaussian noise to an image.  
\textbf{Tree-Ring:} A pattern is added to the initial noise, modifying the distribution of generated images in a detectable way.  
\textbf{Key-Based Watermarking:} A key is sampled to generate distortion-free images linked to the key.  
\textbf{Ours (\methodname):} The initial noise is conditioned on multiple keys derived from the image's semantic embedding, with each key influencing a different patch.}}
    \label{fig:fig1}
\end{figure}

However, existing watermarks that utilize the diffusion model initial noise tend to be vulnerable to other attacks aiming to ``steal" the watermark and apply it to images unrelated to the watermark owners~\cite{yang2024can,muller2025black,jain2025forging}. Some of these \textit{watermark forgery} attacks can be evaded by using a distortion-free watermark - generating watermarked images from a similar distribution to the distribution of non-watermarked images; therefore exposing less information about the watermark identity. Even so, keeping track of a very large number of watermark identities requires maintaining a database of used noises, and might still be forgeable by other attacks~\cite{muller2025black,jain2025forging}. 

To address these challenges, we introduce \methodname ~-~ \textit{Semantic Embedding for AI Lineage}, a method that embeds watermark patterns directly tied to image semantics. Our approach enables direct watermark detection from image samples and offers the following key properties:
(i) \textit{Distortion-free:} As in previous works, we utilize pseudo-random hash functions to generate an initial noise that is similar to the noise used by non-watermarked models, ensuring a similar distribution of generated images.
(ii) \textit{Robust to regeneration attacks:} Similar to prior watermarking methods based on DDIM inversion, our approach demonstrates resilience against regeneration-based removal attempts \cite{zhao2025invisible}.
(iii) \textit{Correlated with image semantics:} The applied watermark encodes semantic information from the image. 
(iv) \textit{Independent of a historical database:} Our approach embeds watermarks without requiring access to a database of used noise patterns.

Our key insight is that we can encode semantic information about the image content in a distortion-free watermark by embedding a semantic encoding of the generation directly into the initial noise. Namely, we may use an encoding of the requested image semantics to seed different pseudo-random patches that compose the initial noise. We ensure the encoded embedding correlates strongly with the resulting image content, and not just with the prompt, which is important since the prompt is not available during detection. At detection time, our approach identifies an image as watermarked only when the watermark pattern is both present and properly correlated with the semantics of the given image. We describe in detail our watermarking technique in \Cref{sec:method}.

Correlating our watermarking algorithm to the image semantics also allows us to resist forgery attempts that are challenging for many existing approaches. An attacker attempting to forge our watermark onto unauthorized content would alter the image's semantic embedding, breaking its correlation with the embedded pattern and rendering the watermark invalid.

One mostly overlooked attack involves an adversary altering only small portions of a watermarked image while preserving the rest of its content. In such cases, the attacker can manipulate the image to be offensive, illegal, or damaging to the watermark owner's reputation, all while the original watermark remains detectable. We term this attack the \textit{\attackname}, as the attacker may add an object to the image (e.g., a cat) and expect the watermark to persist. We evaluate the potential damage of such tamperings and demonstrate that our method provides robustness against both the \attackname ~and other forgery attempts, even by adversaries who obtain accurate copies of our initial noise. Our experiments confirm our method's effectiveness against these novel threats as well as previously studied attack vectors.

\begin{figure*}[t]
    \centering
    \includegraphics[width=0.9\textwidth]{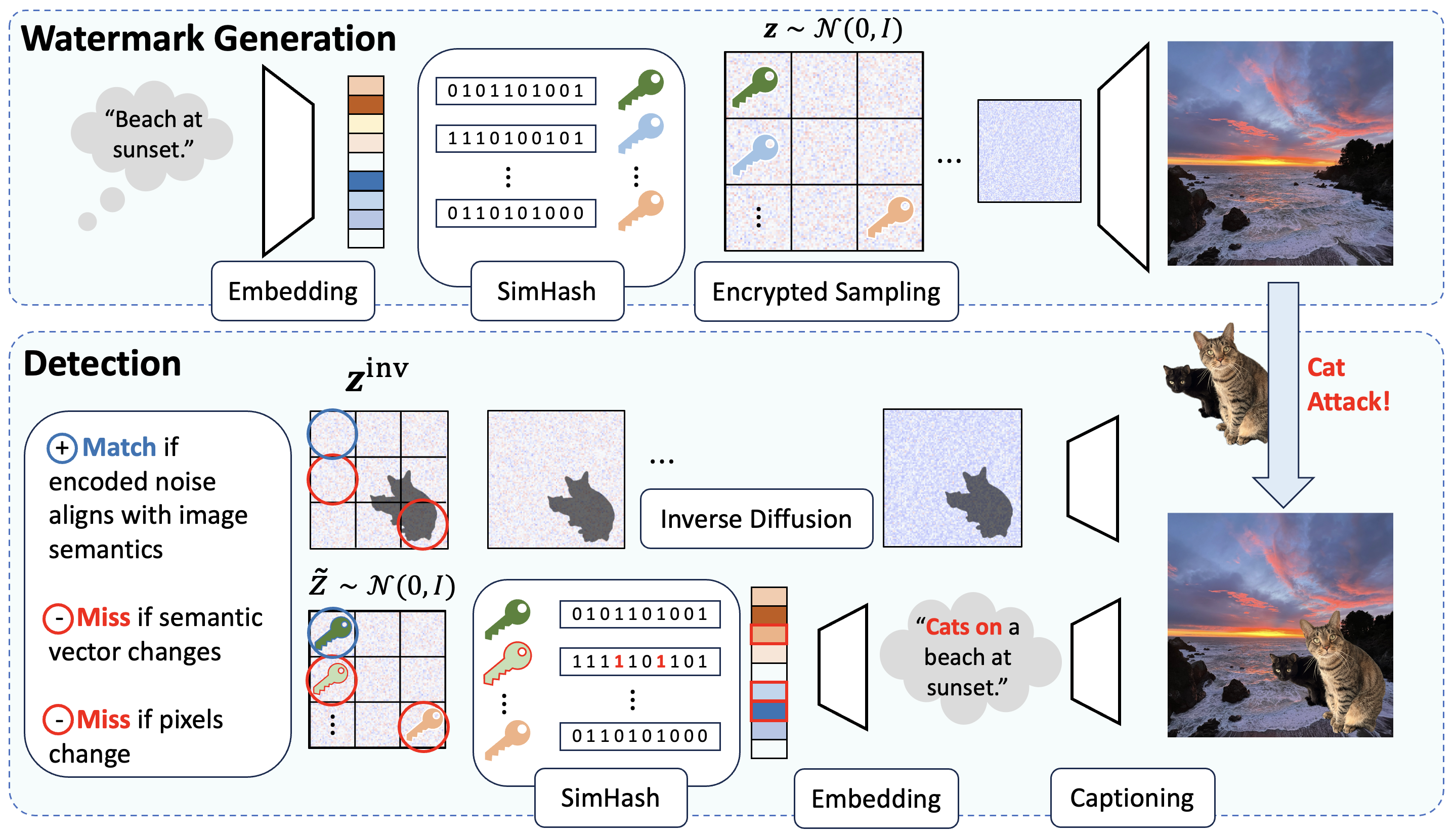}
    \caption{\textit{\textbf{Illustration of the \methodname~watermarking framework for diffusion models using semantic-aware noise patterns.} 
    \textbf{Watermark Generation:} A textual prompt (e.g., ``Beach at sunset.") is first embedded into a semantic space. The embedding is then processed using \textbf{SimHash} to generate discrete keys, which are used in \textbf{Encrypted Sampling} to choose the initial noise $\mathbf{z} \sim \mathcal{N}(0, I)$. The watermarked noise then undergoes standard diffusion to generate the final image.  \textbf{Watermark Detection:} The image is captioned to obtain an embedding, which is then processed by \textbf{SimHash} to generate a reference noise, similarly to watermark generation. This noise remains correlated with the initial noise used during generation as long as the image semantics remain unchanged. The initial noise is also estimated directly through \textbf{Inverse Diffusion} to approximate the actual initial noise used during its generation. If there are insufficient matches between the reference noise and the noise obtained from inversion, the watermarking framework flags the image as non-watermarked. If a key match is found but the image is still deemed suspicious, a detailed inspection of the patches can be performed to identify local edits.
}}
    \label{fig:method_fig2}
\end{figure*}

\noindent\textbf{Our contributions are as follows:}  

\begin{itemize}  
    \item We propose \methodname, a semantic-aware database-free watermarking method that integrates image semantics into the watermark, ensuring it becomes invalid under severe semantic changes.  
    
    \item We investigate the \attackname, highlighting the risks of local edits to watermark owners and assessing their potential impact.

    \item We empirically demonstrate the effectiveness of our watermark against various attacks, including its resistance to adversarial edits.  
\end{itemize}

\section{Related Works}
\label{sec:related_works}
Recent research on image watermarking can be broadly categorized into post-processing and in-processing approaches, each offering distinct trade-offs between quality, robustness, and deployment practicality \citep{an2024waves}. We cover here \textit{In-Processing Methods}, and for \textit{Post-Processing Methods} refer to~\Cref{app:related_works}.

\paragraph{In-Processing Watermarking Methods.}

In-processing approaches integrate the watermark directly within the image generation process. 
Some methods modify the generative model by fine-tuning specific components, as demonstrated in Stable Signature \cite{fernandez2023stable,zhang2024editguard,sander2024watermark}. An alternative class of techniques manipulates the initial noise of the generation process, thereby embedding the watermark without extensive model retraining. For example, Tree-Ring \cite{wen2023tree} embeds a Fourier-domain pattern into the initial noise, which can be detected through DDIM inversion \cite{song2020denoising}, while RingID \cite{ci2024ringid} extends this idea to support multiple keys. Other notable methods include Gaussian Shading, which produces a unique key for each watermark owner \cite{yang2024gaussian}, PRC that leverages pseudo-random error-correcting codes for computational undetectability \cite{gunn2024undetectable}, and WIND, which employs a two-stage detection process to enables a very large number of keys \cite{arabi2024hidden}.

\paragraph{Locally Sensitive Hashing in High-Dimensional Spaces.}  
Recent advances in approximate nearest neighbor (ANN) search have increasingly relied on the power of Locally Sensitive Hashing (LSH) to address the challenges of dealing with high-dimensional data. Originally introduced by Indyk and Motwani \cite{indyk1998approximate} and further refined by Gionis et al. \cite{gionis1999similarity}, LSH employs randomized hash functions that ensure similar data points are mapped to the same bucket with high probability. For a hash family \( \mathcal{H} \), the collision probability is given by  
\[
P(h(x) = h(y)) \approx \text{similarity}(x, y), \quad h \in \mathcal{H}.
\]
Subsequent improvements by Datar et al. \cite{datar2004locality} and Andoni and Indyk \cite{andoni2008near} have enhanced both the efficiency and robustness of LSH methods, making them key for large-scale, high-dimensional search tasks.

\section{\methodname: Semantic Aware Watermarking}
\label{sec:method}

\subsection{Motivation}


Watermarking methods suffer from an inherent trade-off: a watermark that is harder to remove is also easier to attach to unrelated generations, compromising the reputation of the watermark owner \cite{bird2023typology}. One suggested solution to overcome this trade-off, might be maintaining a database of past generations, such that the owner could compare a seemingly watermarked image to the actual past generations. Yet, this solution is not without its problems. First, maintaining and searching a rapidly growing database, which expands with each new generation, can be challenging. Second, safeguarding the database itself may pose security risks. Finally, in various use cases, the watermark owner may not only wish to detect if an image is watermarked but also provide to a third party evidence that it was. We therefore turn to suggest a watermarking scheme that is hard to remove, hard to forge, and does not rely on maintaining a database of past generations.

Our core idea is to use a distortion-free initial noise pattern not only to indicate the origin of the image but also to encode which semantic information the image may contain.
We do so in three stages (see also \Cref{fig:method_fig2}): (i) \textit{Semantic Embedding} – we obtain a vector representing the expected semantic content in each generated image (ii) \textit{SimHash Encoding} – we encode the semantic vector using a set of multi-bit hash functions (iii) \textit{Encrypted Sampling} – The pseudo-random outputs of the hash functions are combined to produce the initial noise for the diffusion denoising process. 
Taken together, these steps set an initial noise that is both distortion-free with respect to standard random initialization and correlated with the semantics of the input prompt (see \Cref{sec:analysis}). We describe our watermarking method in detail below.

\begin{figure}[t]
  \centering
  \begin{subcaptionbox}{\label{fig:cat_image}}[0.45\linewidth][c]{%
    \includegraphics[height=4cm]{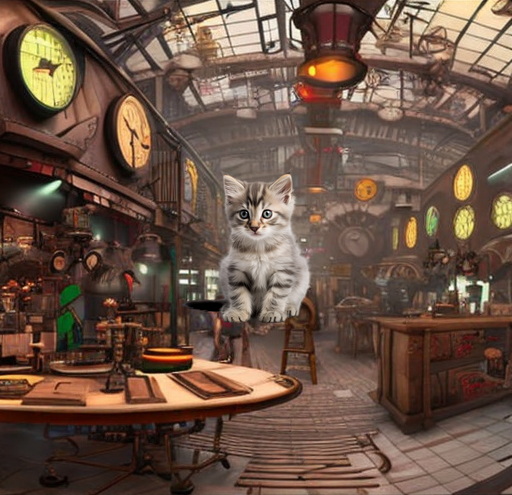}}
  \end{subcaptionbox}
  \hfill
  \begin{subcaptionbox}{\label{fig:cat_patches}}[0.45\linewidth][c]{%
    \includegraphics[height=4cm]{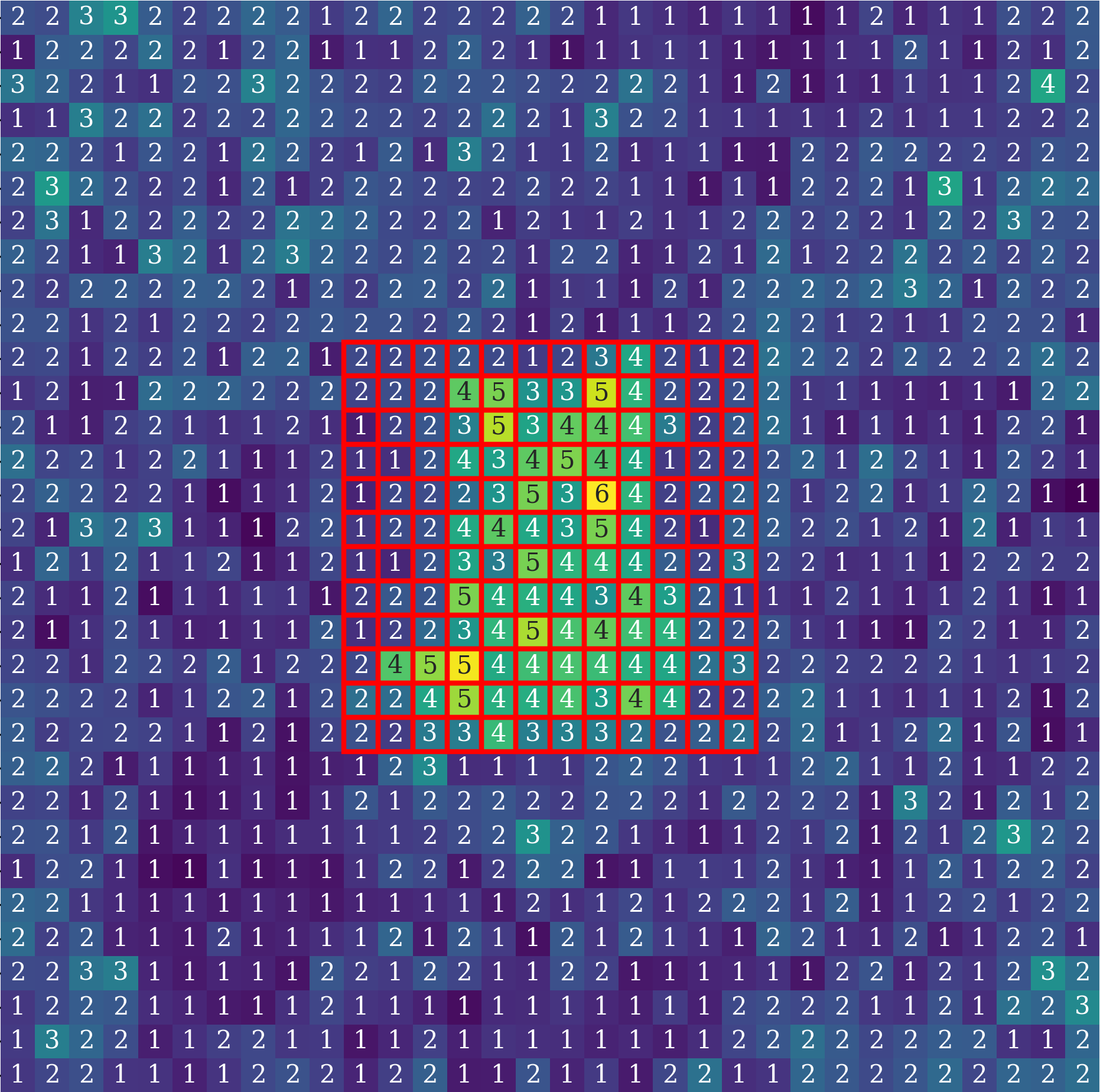}}
  \end{subcaptionbox}
    \caption{\textit{\textbf{Effect of the Cat Attack on \methodname.} (Left) A cat image is pasted onto a watermarked image at a random position and scale. (Right) Our method detects this modification by identifying elevated $\ell_2$ norms in affected patches. Note that the displayed norms are rounded to the nearest integer.}}
  \label{fig:cats}
\end{figure}

\subsection{Method}

Formally, our method first creates a semantic vector $\mathbf{v}$ and uses it to sample the initial noise $\mathbf{z}$ for the watermarked image. During detection, we aim to verify the connection between the used initial noise $\mathbf{z}$ and the semantic embedding of the image.
When approximating $\mathbf{z}$ from the generated image during detection and verifying it, we consider the following error sources: 
\begin{itemize}
    \item We do not have access to $\mathbf{v}$ at detection time; instead, we must use an approximate version $\tilde{\mathbf{v}}$ derived from the image we are analyzing.
    Using $\tilde{\mathbf{v}}$, we produce an approximate version of the used initial noise $\tilde{\mathbf{z}}$.
    \item Because of the randomness in the diffusion process and its inversion, we cannot estimate $\mathbf{z}$ accurately; instead, we get an approximation of the inverted noise $\mathbf{z}^\text{inv}$.
\end{itemize}

Ideally, a watemarked image would yield a perfect match between the noisese derived from the image semantics $\mathbf{z}^{\text{inv}}$ and the inverted $\tilde{\mathbf{z}}$ but this is not guaranteed because both differ from $\mathbf{z}$ due to the error sources mentioned above. 
Yet, we can mitigate this by independently embedding the image semantics across multiple patches. Therefore, our method provides a high likelihood that even if some patches do not match because of the challenges discussed above, many of the patches will match as long as the suspect image is watermarked.

\begin{algorithm}
\caption{Watermark Generation}
\label{alg:watermark_generation}
\begin{algorithmic}[1]
\STATE \textbf{Input:} \texttt{prompt}: text prompt, $n$: number of patches, $b$: number of bits per patch, \texttt{salt}: secret salt
\STATE \textbf{Output: } Watermarked image of \texttt{prompt}
\STATE $\mathbf{z}^{\textnormal{pre}} \sim \mathcal{N}(\mathbf{0}, \mathbf{I})$
\STATE $\mathbf{x}^{\textnormal{pre}} \gets \textnormal{Diffusion}(\mathbf{z}^{\textnormal{pre}}, \texttt{prompt})$
\STATE $\mathbf{v} \gets \textnormal{Embed}(\textnormal{Caption}(\mathbf{x}^\textnormal{pre}))$
\FOR{$i=1,\ldots, n$}
    \STATE $\mathbf{z}_i \gets \texttt{SimHash}(\mathbf{v}, i, \texttt{salt})$
\ENDFOR
\STATE \textbf{return} $\text{Diffusion}(\mathbf{z}, \texttt{prompt})$
\end{algorithmic}
\end{algorithm}

\begin{algorithm}
\caption{\texttt{SimHash}}
\label{alg:simhash}
\begin{algorithmic}[1]
\STATE \textbf{Input:} $\mathbf{v}$: semantic vector, $i$: patch index, \texttt{salt}: secret salt, $b$: number of bits, \texttt{hash}: cryptographic hash function
\STATE \textbf{Output:} Semantic, secure, normally distributed noise
\STATE $\texttt{bits} \gets \mathbf{0}$ \hfill // Initialize hash input 
\FOR{$j = 1, \dots, b$}
    \STATE // Reproducibly sample random vector
    \STATE $s \gets \texttt{hash}(i,j,\texttt{salt})$
    \STATE Sample $\mathbf{r}_j^{(i)} \gensimvec \mathcal{N}(\mathbf{0}, \mathbf{I})$
    \STATE $\texttt{bits}[j] \gets \text{sign}(\langle \mathbf{v}, \mathbf{r}_j^{(i)} \rangle)$ \hfill // Random projection
\ENDFOR
\STATE $s_i \gets \texttt{hash}(\texttt{bits}, i, \texttt{salt})$
\STATE \textbf{return} $\mathbf{z}_i \gensimi \mathcal{N}(\mathbf{0}, \mathbf{I})$
\end{algorithmic}
\end{algorithm}

\subsection*{Watermark Generation}

The first step of the generation process is to find a semantic vector $\mathbf{v}$ describing the image that will be generated. Ideally, the semantic vector depends only on the prompt and correlates exclusively with images generated from it. Yet, in practice, predicting the final image semantics based on the user prompt is difficult.

To approximate the generated image semantics, our solution begins by generating a proxy image $\mathbf{x}^{\textnormal{pre}}$.
We first caption the image using
BLIP-2 model \cite{li2023blip}.
Then, the caption is embedded into a latent semantic space using the Paraphrase Mpnet Base V2 model \cite{reimers-2019-sentence-bert}, resulting in a semantic vector $\mathbf{v}$ which captures the high-level semantics of the generated image by the prompt (a similar concept also explored in the concurrent work of SWIFT~\cite{evennou2024swift}).

\textbf{Semantic Embedding Optimization.}
During detection, the generated image will be captioned to obtain a semantic vector $\Tilde{\mathbf{v}}$, approximating the semantic vector $\mathbf{v}$ that was used to seed the random noise. To ensure a similarity between $\mathbf{v}$ and $\Tilde{\mathbf{v}}$, it is not enough only to generate the proxy image $\mathbf{x}^{\textnormal{pre}}$ with the same prompt (a qualitative comparison of the captioned proxy image and the final generated image can be found at~\Cref{fig:proxy_vs_actual}). Therefore, to encourage the embedding $\mathbf{v}$ to correlate to $\Tilde{\mathbf{v}}$ and not to unrelated vectors, we fine-tuned the embedding model using 10k pairs of related captions, leading to additional improvements (\Cref{fig:embedding-ablation}). The full implementation details of the fine-tuning process can be found in~\Cref{app:embedding_opt}.

After obtaining the desired semantic embedding, we generate the watermarked noise $\mathbf{z}$ using the semantic vector ${\mathbf{v}}$ and the SimHash algorithm described below.
Finally, we will use the diffusion mode to generate the image with the watermarked initial noise. The generation algorithm is summarized in \Cref{alg:watermark_generation}.

\subsection*{Semantic Patterns with SimHash}

The core subroutine of our watermarking method is SimHash \cite{charikar2002similarity}, used to generate initial noise patches correlated to a given vector (Algorithm \ref{alg:simhash}).
SimHash takes a vector \( \mathbf{v} \) and generates an initial noise \( \mathbf{z}_i \) for patch $i$, allowing a verifier to later determine whether \( \mathbf{z}_i\) is related to \( \mathbf{v} \). Namely, the semantic vector $\mathbf{v}$ is passed through a locality-sensitive hashing method that generates representations of $\mathbf{v}$ in terms of its projections onto random directions.

Specifically, SimHash projects $\mathbf{v}$ onto a set of random vectors for each patch of the initial noise map. It uses $b$ projection vectors for each of the $k$ noise patches.
Each noise patch is generated using a seed determined by the sign of the projection of the semantic vector onto each of the $b$ projection directions. 
For $i \in \{1,\ldots, k\}$, the seed and the noise for patch $i$ are:
\begin{align}
    s_i &= \texttt{hash}(\sign(\langle\mathbf{v}, \mathbf{r}_1^i\rangle),\ldots,\sign(\langle\mathbf{v}, \mathbf{r}_b^i\rangle), i, \texttt{salt}).
    \nonumber \\
    \mathbf{z}_i &\gensimi \mathcal{N}(\mathbf{0}, \mathbf{I}) \nonumber
\end{align}
This ensures that similar semantic vectors would yield similar hash values.

Yet, having repetitive bit inputs ($s_i$) may result in repetitive patches in the initial noise, and therefre may distort image generation. Therefore, we include the patch index in the hash function input to ensure that $s_i \neq s_j$ even when the input bits are identical (see \Cref{fig:carpet} for generation samples in the case of repetitive noise patches).
For cryptographic security, we also use a user-specific secret salt.

\begin{algorithm}
\caption{Watermark Detection}
\label{alg:watermark_detection}
\begin{algorithmic}[1]
\STATE \textbf{Input: } $\tilde{\mathbf{x}}$: suspect image, 
$\tau:$ patch distance threshold, $n$: number of patches, $m^\text{match}$: match threshold, $b$: number of bits per patch, \texttt{salt}: secret salt
\STATE \textbf{Output: } Watermark detection (\textbf{True}/\textbf{False})
\STATE $\tilde{\mathbf{v}} \gets \textnormal{Embed}(\textnormal{Caption}(\tilde{\mathbf{x}}))$
\STATE $\mathbf{z}^{\textnormal{inv}} \gets \text{InverseDiffusion}(\tilde{\mathbf{x}})$
\STATE $m \gets 0$
\FOR{$i=1,\dots,n$}
\STATE $\tilde{\mathbf{z}}_i \gets \texttt{SimHash}(\tilde{\mathbf{v}}, i, \texttt{salt})$
\IF{$ \|\tilde{\mathbf{z}}_i - \mathbf{z}^\text{inv}_i \|_2 < \tau$}
    \STATE $m ++$
\ENDIF
\ENDFOR
\STATE \textbf{return} $m \geq m^\text{match}$
\end{algorithmic}
\end{algorithm}

\subsection*{Watermark Detection}
For detection, we generate noise based on the semantic content of the image and check how well it corresponds to the reconstructed noise obtained through DDIM inversion (\Cref{alg:watermark_detection}).
We begin by embedding the image to get a semantic vector $\tilde{\mathbf{v}}$ that captures the content of the image.
SimHash is then applied to this vector as in the watermark generation process, generating an estimated initial noise $\tilde{\mathbf{z}}$.
Finally, we use inverse diffusion (e.g., DDIM \cite{song2020denoising}) to approximately reconstruct the initial noise $\mathbf{z}^\text{inv}$ from the image.

Since $\mathbf{v}$ and $\tilde{\mathbf{v}}$ may differ, the originally used noise $\mathbf{z}$ and $\tilde{\mathbf{z}}$ are not necessarily the same.
However, by the similarity property of SimHash, $\mathbf{z}$ and $\tilde{\mathbf{z}}$ will be identical on some patches with very high probability as long as $\mathbf{v}$ and $\tilde{\mathbf{v}}$ are close.
On any patch number $i$ where the SimHash patch match ($\tilde{\mathbf{z}}_i = \mathbf{z}_i$), we get:
\begin{align}
    \| \tilde{\mathbf{z}}_i - \mathbf{z}^\text{inv}_i\|_2
    = \| \mathbf{z}_i - \mathbf{z}^\text{inv}_i\|_2.
\end{align}
For such patches, the only challenge in identifying the watermark stems from the difference between the originally used noise and the reconstructed noise through DDIM.
Empirically, we find that the $\ell_2$-norm of the difference between the inverted and expected noise patches ($\| \tilde{\mathbf{z}}_i - \mathbf{z}^\text{inv}_i\|_2$) allows us to detect whether the inverted noise patches originated from the suspected noise patch with a $>99.9\%$ ROC-AUC.


\begin{figure}[t!]
    \centering
    \includegraphics[width=0.55\linewidth]{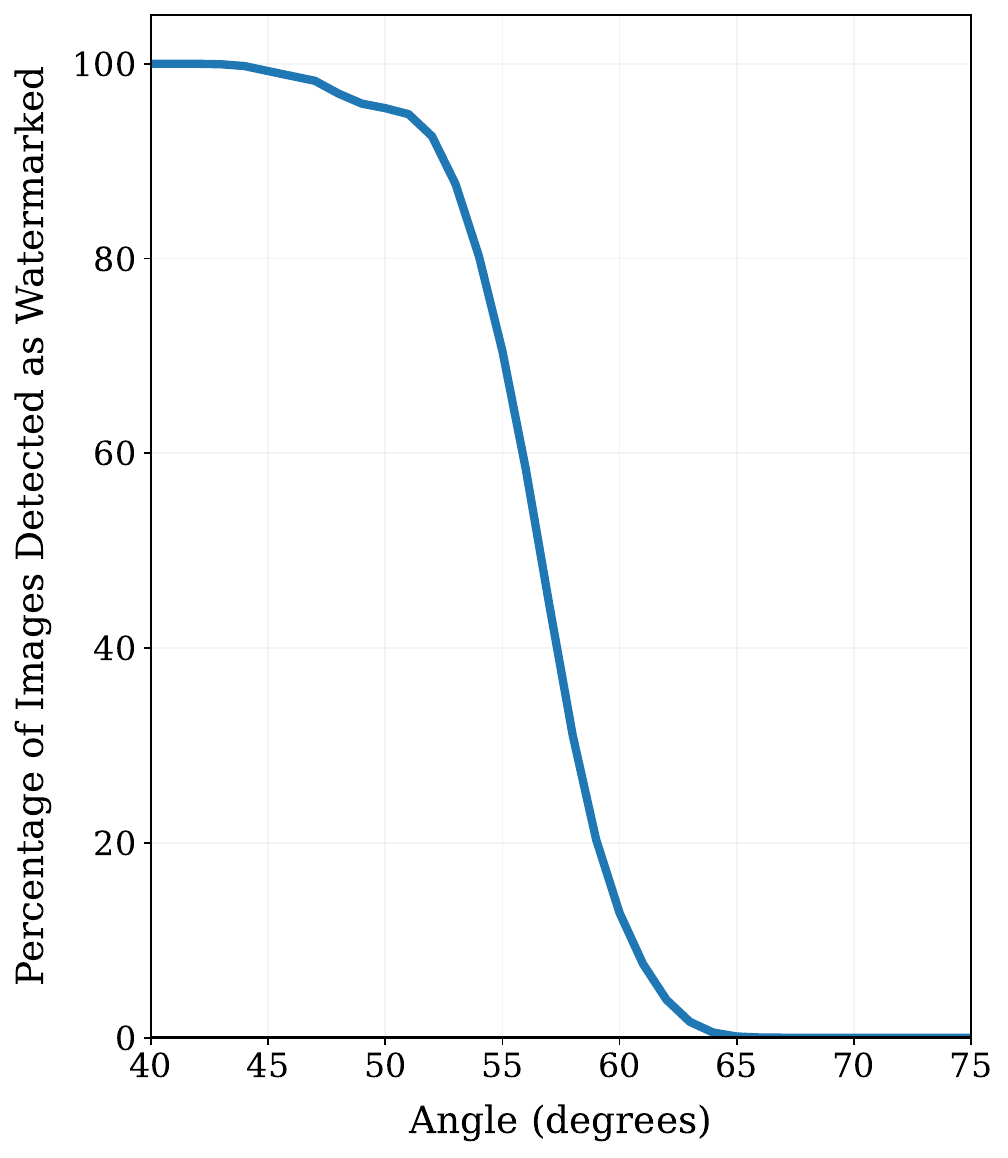}
    
    \vspace{1em}
    
    {\scriptsize
    \setlength{\tabcolsep}{8pt}
    \begin{tabular}{cc}
        \toprule
        \textbf{Angle $\theta(\mathbf{v}, \tilde{\mathbf{v}})$} & \textbf{Detection Probability} \\
        \midrule
        $65^\circ$ & $8.55 \times 10^{-4}$ \\
        $60^\circ$ & $0.053$ \\
        $55^\circ$ & $0.551$ \\
        $50^\circ$ & $0.998$ \\
        $45^\circ$ & $1.000$ \\
        \bottomrule
    \end{tabular}
    }
    \caption{\textit{\textbf{Watermark Detection vs. Semantic Similarity.}} We plot the empirical probability of detecting a watermark as a function of the angle between the semantic embedding used for watermark generation and that of the inspected image ($n=1024$, $b=7$). The table shows the analytical detection probabilities at key angles calculated by Lemma \ref{lemma:detect}, illustrating how sharply SimHash distinguishes semantically related images from unrelated ones.}
    \label{fig:sem_angle}
\end{figure}

\paragraph{Semantic Similarity Detection.}
Finally, in order to detect whether an image was initially generated with our watermark, we count the number of patches that \textit{match} (i.e., their $\ell_2$-norm distance is below a threshold $\tau$).
If the number of matches is above a set threshold $n^\text{match}$ then we declare the image is watermarked.
In \Cref{sec:analysis}, we analyze the probability of correctly identifying a watermarked image.

\paragraph{Tampering Detection with a Spatial Test.}\label{par:spatial_test_detection}
In addition to the association between the watermark and the semantic embedding, edits such as object addition, removal, or modification are likely to alter the estimated initial noise in the affected image regions. This enables our watermark to provide localized information about edits that might have been made to the image.
Consequently, even when the semantic embedding of the image $\tilde{\mathbf{v}}$ aligns well with the initial embedding used to seed the noise $\mathbf{v}$, such tampering edits can still be detected by identifying localized patches in the reconstructed initial noise that neither match the expected noise nor any other valid input to the hash function. 

To detect such cases, we may inspect the noise patches one by one. Given the model owner's private information, we may recover the $b$ input bits used to seed each patch with an exhaustive search over the $2^b$ options per patch, and recover a matching initial noise.
Comparing this reconstructed noise to the inverted noise $\mathbf{z}^{\text{inv}}$ allows us to detect which patches may have been modified. The total time for this search scales as $n \cdot 2^b$ (which is much faster than naively searching over all $2^{(b \cdot n)}$ possible initial noise). After obtaining a per patch noise-matching map (see \Cref{fig:cat_patches}), we may apply a \textit{spatial test} as the one described in \Cref{app:spatial_test} to detect tampering attempts.
In any case, the local patch inspection is only required when an image is deemed watermarked by semantic similarity detection; but the watermark owner would like to have a finer understanding of the edits that might have been applied to it. This inspection is especially useful against the \attackname, described in \Cref{sec:experiments}.

\begin{figure*}[t!]
    \centering
    \begin{subfigure}{0.32\textwidth}
        \centering
        \includegraphics[width=\linewidth]{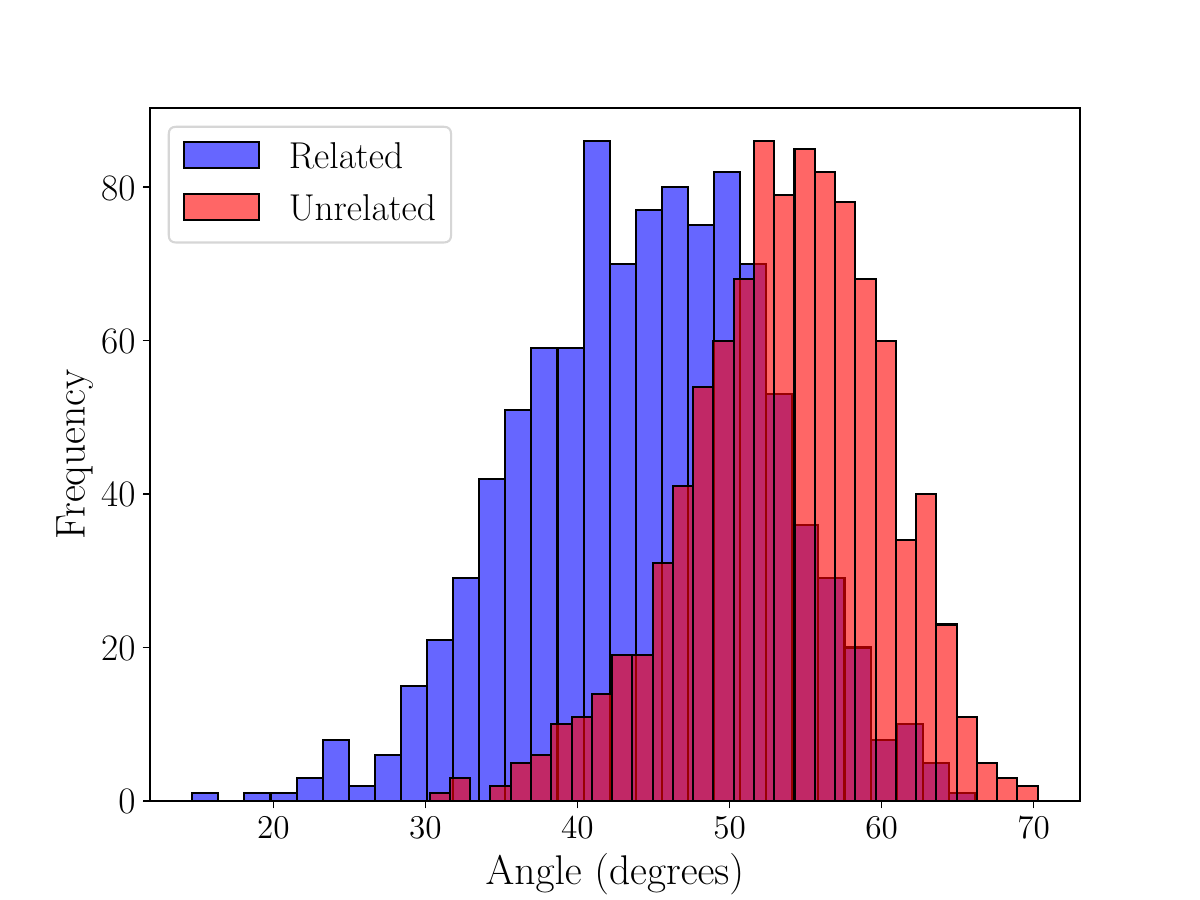}
        \caption{\textit{\textbf{Image Feature Vector.}} Direct use of image feature embedding fails to separate related from unrelated images.}
        \label{fig:ablation-clip}
    \end{subfigure}
    \hfill
    \begin{subfigure}{0.32\textwidth}
        \centering
        \includegraphics[width=\linewidth]{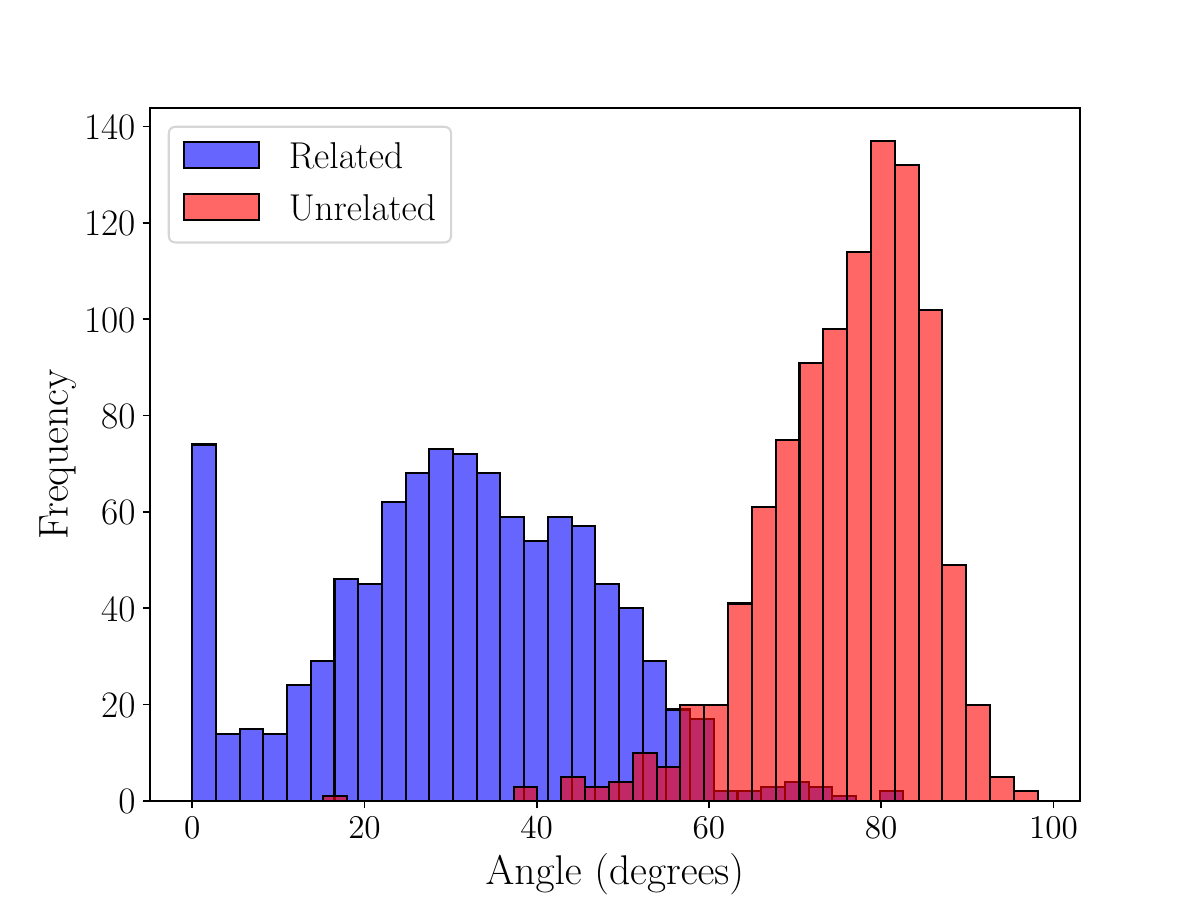}
        \caption{\textit{\textbf{Caption Embeddings.}} Employing caption embeddings from \textit{blip2-flan-t5-xl} and \textit{paraphrase-mpnet-base-v2} yields improved separation.}
        \label{fig:ablation-paraphrase}
    \end{subfigure}
    \hfill
    \begin{subfigure}{0.32\textwidth}
        \centering
        \includegraphics[width=\linewidth]{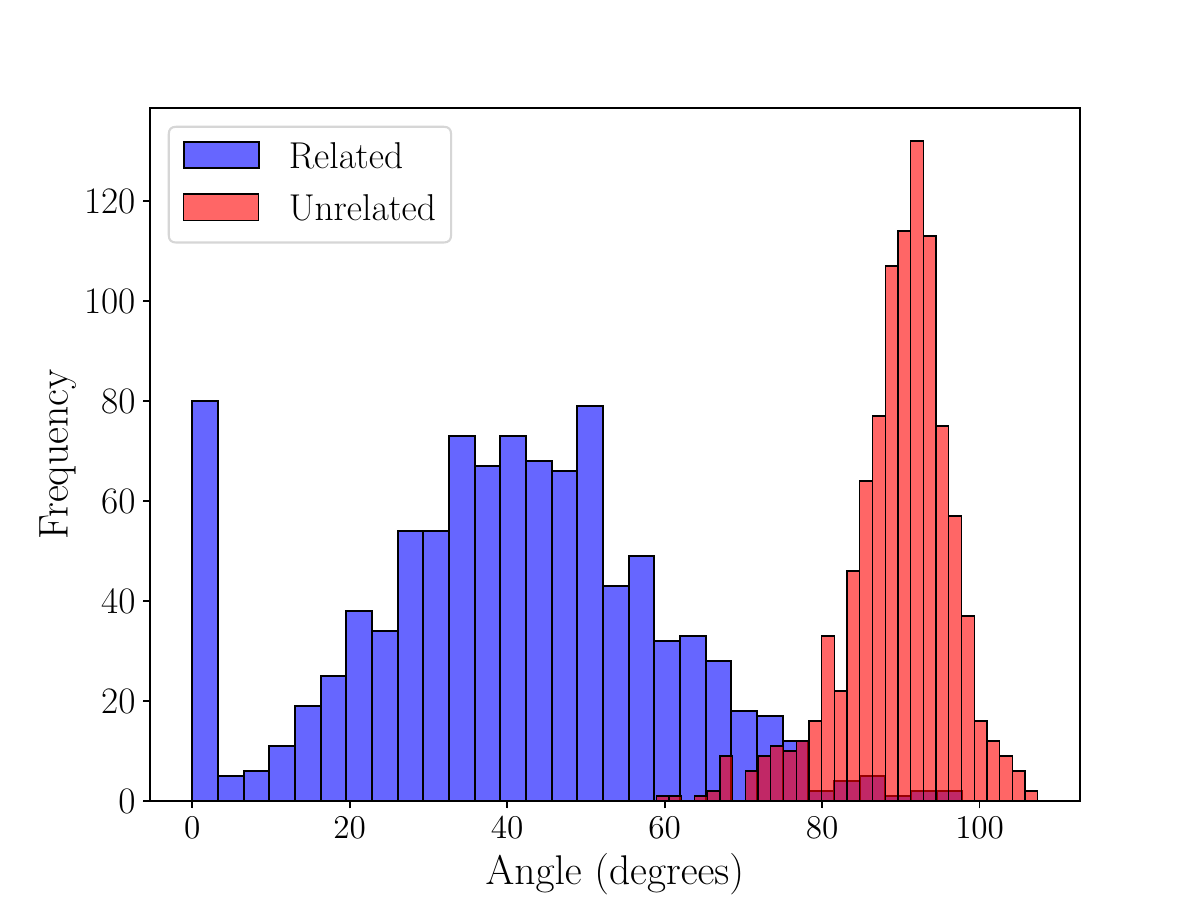}
        \caption{\textit{\textbf{Fine-tuned Caption Embeddings.}} Fine-tuning the embedding model on 10k caption pairs further enhances separation.}
        \label{fig:ablation-finetuned}
    \end{subfigure}
    \caption{\textit{\textbf{Ablation of Embedding Strategies for Watermark Detection.}} Comparison of angle separation between related and unrelated images using different embedding approaches. The raw image feature vector (left) fails to distinguish semantic relationships, while caption embeddings (center) substantially improve separation. Fine-tuning the embedding model (right) yields additional gains in detection accuracy.}
    \label{fig:embedding-ablation}
\end{figure*}

\subsection{Analysis}
\label{sec:analysis}

Before formally analyzing our watermarking scheme, we state a simplifying assumption on the distance between the initial and reconstructed noise patches.
We assume the noise patches are close if and only if the suspect image was produced from the same noise as the one given by our watermarking scheme.
The impact of low-likelihood events, where unrelated patches end up close after noise reconstruction, remains part of our empirical analysis in \Cref{sec:experiments}.

\begin{assumption}[Patch 
\label{asumption:ideal}
Distance Separation]\label{ass:patch}
    There is a threshold $\tau^{\textnormal{dist}}$ so that,
    for all generation noises $\mathbf{z}$, inverted noises $\mathbf{z}^{\textnormal{inv}}$, and patches indeces $i \in [k]$,
    \begin{align*}
       \| \mathbf{z}_i - \mathbf{z}_i^{\textnormal{inv}} \|_2 \leq \tau^{\textnormal{dist}}
    \end{align*}
    if and only if $\mathbf{z}^\textnormal{inv} = \textnormal{InverseDiffusion}(\textnormal{Diffusion}(\mathbf{z}))$.
\end{assumption}

An immediate consequence of the patch distance separation assumption is that we never declare an image as watermarked if its initial noise was not generated using our watermarking scheme.

\paragraph{Unrelated Prompts.} A key property of our watermarking approach is its resistance to forgeries generated from unrelated prompts. Prior watermarking methods declare an image as watermarked as long as the watermarking pattern is embedded in the initial noise and the diffusion and inverse diffusion processes remain reasonably accurate. However, this creates vulnerabilities - an adversary could take an existing watermark and apply it to an unrelated, potentially offensive, or misleading prompt. In contrast, our approach strengthens watermark integrity by requiring that the new prompt remains semantically close to the original. This ensures that watermarks are not erroneously detected in entirely unrelated images. We formalize this claim the the lemma below. 

\begin{lemma}[Detection Probability]\label{lemma:detect}
    Consider a suspect image $\tilde{\mathbf{x}}$ produced from our watermarking scheme with initial semantic vector $\mathbf{v}$.
    Let $\tilde{\mathbf{v}}$ be the (possibly quite different) semantic embedding of $\tilde{\mathbf{x}}$, and $\theta \in [-90\degree, 90\degree]$ be the angle between $\mathbf{v}$ and $\tilde{\mathbf{v}}$.
    Set $\theta^\text{mid}$ as the threshold between semantic vectors we deem \textnormal{related} vs. \textnormal{unrelated}.
    The probability that we identify the image as watermarked is
    \begin{align}\label{eq:match_prob}
        \sum_{k=\lfloor n \rho(\theta^\text{mid})\rfloor}^{n}
        \binom{n}{k} \rho(\theta)^k (1-\rho(\theta))^{n - k}.
    \end{align}
    where $\rho(\theta) =\left( 1- \frac{\theta}{180\degree}\right)^b$.
\end{lemma}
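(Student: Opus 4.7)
The plan is to reduce the detection event to a tail probability of a sum of independent Bernoulli random variables by exploiting the classical SimHash collision property together with Assumption~\ref{ass:patch}. The target is to show that, for each patch $i$, the probability that the estimated seed $\tilde{s}_i$ equals the original seed $s_i$ is exactly $\rho(\theta)=(1-\theta/180\degree)^b$, and that these $n$ events are mutually independent. Then detection, which requires $m\ge m^{\text{match}}=\lfloor n\rho(\theta^{\text{mid}})\rfloor$, reduces to a binomial tail probability with the stated lower summation index.

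First, I would fix a patch index $i$ and recall the SimHash collision lemma: for a single projection vector $\mathbf{r}$ drawn from a rotationally symmetric distribution (e.g.\ $\mathcal{N}(\mathbf{0},\mathbf{I})$),
\begin{equation*}
\Pr\bigl[\sign(\langle \mathbf{v},\mathbf{r}\rangle)=\sign(\langle \tilde{\mathbf{v}},\mathbf{r}\rangle)\bigr]=1-\tfrac{\theta}{180\degree},
\end{equation*}
because the hyperplane orthogonal to $\mathbf{r}$ separates $\mathbf{v}$ and $\tilde{\mathbf{v}}$ iff $\mathbf{r}$ lies in a lune of measure $\theta/180\degree$. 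Since Algorithm~\ref{alg:simhash} uses $b$ independent projection vectors $\mathbf{r}_1^{(i)},\dots,\mathbf{r}_b^{(i)}$, all $b$ sign bits agree with probability $(1-\theta/180\degree)^b=\rho(\theta)$. If all $b$ bits agree, the hash inputs are identical, so $\tilde{s}_i=s_i$ and hence $\tilde{\mathbf{z}}_i=\mathbf{z}_i$; if any bit disagrees, the cryptographic hash produces an independent seed, so $\tilde{\mathbf{z}}_i$ is drawn fresh and, by Assumption~\ref{ass:patch}, fails the $\tau^{\text{dist}}$ distance test. Invoking Assumption~\ref{ass:patch} again in the matching case gives $\|\tilde{\mathbf{z}}_i-\mathbf{z}_i^{\text{inv}}\|_2\le\tau^{\text{dist}}$, so ``$\tilde{s}_i=s_i$'' coincides with the event ``patch $i$ is counted toward $m$.''

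Next I would upgrade the per-patch statement to a joint statement across the $n$ patches. Since the projection vectors are sampled independently for distinct patch indices (the salted hash in line 6 of Algorithm~\ref{alg:simhash} ensures fresh randomness per $(i,j)$), the per-patch matching events are mutually independent Bernoulli$(\rho(\theta))$ variables. Therefore $m\sim\text{Binomial}(n,\rho(\theta))$, and
\begin{equation*}
\Pr[\text{detect}]=\Pr[m\ge m^{\text{match}}]=\sum_{k=\lfloor n\rho(\theta^{\text{mid}})\rfloor}^{n}\binom{n}{k}\rho(\theta)^k\bigl(1-\rho(\theta)\bigr)^{n-k},
\end{equation*}
which is exactly~\eqref{eq:match_prob}.

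The main obstacle I anticipate is justifying the clean dichotomy between ``bits fully agree $\Rightarrow$ patch is counted'' and ``bits disagree $\Rightarrow$ patch is not counted.'' The first direction is immediate from Assumption~\ref{ass:patch} once $\tilde{\mathbf{z}}_i=\mathbf{z}_i$, but the second requires that a mismatched seed produces a fresh Gaussian patch that, with probability effectively one under the idealized model, lies outside the $\tau^{\text{dist}}$ ball around $\mathbf{z}_i^{\text{inv}}$; this is precisely the content of Assumption~\ref{ass:patch} applied to unrelated noise inputs, and the remaining low-probability collisions are explicitly deferred to the empirical evaluation in Section~\ref{sec:experiments}. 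Independence across patches also needs a brief comment: although the projection directions for different patches are freshly hashed, one should note that the \emph{same} embedding pair $(\mathbf{v},\tilde{\mathbf{v}})$ is used for every patch, so independence holds conditional on $(\mathbf{v},\tilde{\mathbf{v}})$, which is exactly what the binomial-tail expression requires.
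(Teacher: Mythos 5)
Your proposal is correct and follows essentially the same route as the paper's proof: per-bit SimHash collision probability $1-\theta/180\degree$, raised to the $b$ independent projections to get $\rho(\theta)$, combined with Assumption~\ref{ass:patch} to equate seed agreement with the patch-counting event, independence across the $n$ patches to get a Binomial count, and the threshold $m^{\text{match}}=\lfloor n\rho(\theta^{\text{mid}})\rfloor$ to obtain the stated tail sum. Your additional remarks on mismatched seeds producing fresh Gaussian patches and on conditioning on $(\mathbf{v},\tilde{\mathbf{v}})$ match the paper's footnoted treatment of negligible hash collisions and its independence claim.
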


We illustrate in the example below the sharp detection thresholds Lemma \ref{lemma:detect} implies. Namely, we show how the watermark detection probability varies with semantic similarity between the original and a potentially modified image. 
We delay the proof of Lemma \ref{lemma:detect} to the appendix.


\begin{example}[Sharp Detection Thresholds] 
    Our watermarking scheme embeds a semantic vector $\mathbf{v}$ into an image at generation time.
    When evaluating a suspect image that was generated via our watermark, we extract its current semantic vector $\tilde{\mathbf{v}}$. The probability of a watermark detection depends on the semantic angle $\theta(\mathbf{v}, \tilde{\mathbf{v}})$ between $\mathbf{v}$ and $\tilde{\mathbf{v}}$.
    
    For instance,~\Cref{fig:ablation-finetuned} illustrates a separation between vectors associated with the original image and those that are unrelated, occurring at a threshold of approximately $\theta^\text{mid} \approx 55\degree$.
    When our watermarking scheme is run with $\theta^\text{mid} = 55\degree$, $n=1024$, and $b=7$ (see~\Cref{fig:kb-ablation} for ablation of these parameters).~\Cref{fig:sem_angle} quantifies the probability of a watermark detection:
    We observe near-perfect separation between related and unrelated watermarked images for angles exceeding $5\degree$ beyond the threshold.
    For comparison,~\Cref{app:bin_switch} presents the semantic angle shift resulting from the simple insertion of different objects.

\end{example}

\section{Empirical Analysis}
\label{sec:experiments}

In this section, we empirically evaluate the robustness of \methodname to different attacks. 

\paragraph{Setting.} To ensure a fair comparison with prior work~\cite{wen2023tree,ci2024ringid,arabi2024hidden}, we use Stable Diffusion-v2~\cite{rombach2022high} with 50 inference steps for both generation and inversion for all methods. Evaluations were conducted on a set of prompts sourced from~\cite{gustavosta2024pro}. We set \( n = 1024 \) and \( b = 7 \) for all experiments. An ablation study on the effects of $n$ and $b$ is available in \Cref{sec:abiliation}.

\paragraph{Regeneration with the Private Model.}
Prior works assume that the attacker lacks access to the model weights (which are needed for accurate DDIM noise inversion) and that the noise used during generation cannot be forged or approximated with sufficient accuracy~\cite{arabi2024hidden}. 
Going beyond previous studies, we consider here a more challenging scenario in which the attacker has full access to the model weights and can invert the generated image using the same model that produced it. The attacker's access to the private model is taken as an upper bound for the attacker's capability in practical forgery attacks \cite{arabi2024hidden,muller2025black,jain2025forging}.

In our experiment, we first generate an image using watermarked noise. We then perform an inversion with the same model to recover an approximate initial noise, which is subsequently used to generate a second, forged, image. Because the attack prompt differs from the original prompt, the semantic embedding of the image $\Tilde{\mathbf{v}}$   changes to ${\mathbf{v}}_{attack}$. The detection algorithm, therefore compares the estimated noise to a reference derived from ${\mathbf{v}}_{attack}$ (and not from $\Tilde{\mathbf{v}}$).
The noise pattern derived from ${\mathbf{v}}_{attack}$ during detection is less likely to correlate to the pattern embedded in the image, enabling the detection algorithm to declare the image as not watermarked and evade the attack.
As can be seen in \Cref{tab:regenneration}, our method uniquely provides non-trivial robustness in this setting.

We also evaluate the Latent Forgery Attack directly \cite{jain2025forging} in \Cref{sec:latent_forgery_attacks}.

\paragraph{Cat Attack.}
A significant practical threat to the reputation of a watermark owner arises from localized modifications that shift the semantic interpretation of a watermarked image, as opposed to producing a wholly new image. To evaluate our method's resilience against such tampering, we introduce an evaluation we term the \attackname{}.

In this experiment, a cropped object (e.g., a cat) is pasted onto a watermarked image. The cat image is randomly resized to between 30\% and 60\% of the watermarked image’s dimensions and placed at a random location, as exemplified in \Cref{fig:cat_image}. Unlike previous watermarking techniques that may overlook semantic content, our approach is designed to detect such alterations. 

As shown in \Cref{fig:cat_patches}, the pasting of the object leads to elevated $\ell_2$ norms in the affected patches. Quantitative results are presented in \Cref{tab:cat-attack}, and comparison of different object sizes can be found in the Table in \Cref{tab:cat_attack_app}. These results reveal that while our basic detection offers some robustness, integrating the local spatial test (described in \Cref{sec:method}) significantly improves the detection of these edits. This demonstrates a key advantage of our method.
We note that in a different setting, methods such as \cite{zhang2024editguard,sander2024watermark} offer a strong solution for tampering detection via post-hoc watermarking. 
Since robustness to tampering can be in tension with resistance to removal attacks, we next analyze our method's performance against standard removal attacks.

\paragraph{Regeneration Based Removal Attack.} Our method is robust to regeneration-based removal attacks~\cite{zhao2025invisible}, similarly to other initial-noise-based approaches~\cite{gunn2024undetectable,arabi2024hidden,yang2024gaussian}, and it significantly outperforms classical watermarking methods (see~\Cref{app:regenration_attacks}).

\paragraph{Steganalysis Removal Attack.} We evaluate the robustness of our method against a steganalysis attack~\cite{yang2024steganalysis} that attempts to approximate the watermark by subtracting non-watermarked images from watermarked ones. As shown in~\Cref{tab:steganalysis}, \methodname~ maintains high performance under this attack.

\begin{table}[t]
    \centering
    \caption{\textit{\textbf{Detection of the Cat Attack.} ROC-AUC of detecting edits in generated images, as described in \Cref{sec:experiments}.}}  
    \begin{tabular}{lc}
        \toprule
        \textbf{Method} & \textbf{AUC} \\
        \midrule
        WIND & 0.000 \\
        Tree-Ring & 0.000 \\
        Gaussian Shading & 0.000 \\
        \methodname & 0.551 \\
        \methodname + Spatial Test & \textbf{0.982} \\
        \bottomrule
    \end{tabular}
    \label{tab:cat-attack}
\end{table}

\begin{table}
    \centering
    \caption{\textit{\textbf{Robustness to Private Model-Based  Forgery Attack.} An attacker with access to the private model weights can approximate the watermarked initial noise by inverting a watermarked image using the private model. We evaluate how accurately different methods evade the false identification of unrelated images, generated with this initial noise, as watermarked.}}
    \begin{tabular}{lc}
        \toprule
        \textbf{Method} & \textbf{AUC} \\
        \midrule
        WIND & 0.000 \\
        Tree-Ring & 0.000 \\
        Gaussian Shading & 0.000 \\
        \methodname & \textbf{0.708} \\
        \bottomrule
    \end{tabular}
    \label{tab:regenneration}
\end{table}

\paragraph{Robustness to Image Transformations.}
We evaluated the robustness of \methodname~ under a standard suite of image transformations (see \Cref{app:transformation}). As shown in~\Cref{fig:image-transformations}, \methodname~achieves an average ROC-AUC of $0.896$ under these conditions, computed via the Spatial Test detector (\Cref{par:spatial_test_detection}). This is comparable to some watermarking techniques and somewhat lower than others \cite{wen2023tree}. Yet, our method provides a unique resistance to forgery. Further enhancements, such as incorporating rotation search or sliding-window search during detection (see \cite{arabi2024hidden}), could improve its robustness against removal attempts.


\paragraph{Ablation of Captioning and Embedding Models.}
A straightforward approach for our method to approximate the final image semantics to embed it in the noise would be to use the visual feature vector from the proxy-generated  $\mathbf{x}^{\textnormal{pre}}$ rather than the embedding of its caption. However, as illustrated in \Cref{fig:ablation-clip}, this approach fails to yield a clear separation between related and unrelated images. Consequently, we employ the captioning and caption-embedding for deriving caption embeddings, which results in a more distinct separation as shown in \Cref{fig:ablation-paraphrase}. To further enhance our method's accuracy, we fine-tuned the embedding model using 10k pairs of related captions, leading to additional improvements (\Cref{fig:ablation-finetuned}).

\paragraph{Generation Quality.}
Our watermarking method is distortion-free at the single-image level, since the added noise is sampled from a pseudo-random Gaussian distribution, similarly to non-watermarked image generation. As a result, all single-image quality metrics remain identical to those of non‑watermarked images (see~\Cref{tab:clip_results}, \Cref{fig:generated_images}). 

\section{Limitation and Discussion}
\label{sec:discussion}

\paragraph{Stronger Forgery Attacks.}
Although we evaluated a stronger set of forgery attacks compared to previous works, other types of forgery attacks might still potentially compromise our watermark. For example, a highly persistent attacker might attempt to gather information about the correlation between individual initial noise patches and the image semantics. While not theoretically impossible, an attacker would face several practical limitations in carrying out such an attack. Among them are the lack of access to the private model weights, the inherent stochasticity of the watermark, and the watermark owner's ability to deploy multiple instances of the hash function by using multiple secret salts.

\paragraph{Attacker Advantage and Removal Attacks.} Our method is more vulnerable to removal attacks than some existing methods. However, we believe that a sufficiently persistent attacker can remove most current watermarks. 
Nonetheless, improving watermark robustness against forgery attacks holds significant societal value - it is essential for protecting the model owner's reputation and, consequently, for enabling practical deployment.

\noindent\paragraph{}Additional limitations and discussion points can be found in \Cref{app:additional_limit}.

\section{Conclusion}
\label{sec:conclusion}
We introduce the first initial noise-based watermarking method for diffusion models that is both database-free and semantic-aware. Our suggested watermark is uniquely robust against a new class of stronger forgery attacks. We hope our work highlights the potential of semantic-aware watermarking and helps pave the way forward for further research in this area.

\section*{Acknowledgments}
We thank Jonas Thietke and Haidar Shreif from Ruhr University Bochum for their careful reading of our manuscript and for pointing out a clarity issue with Figure~6 in an earlier version of this paper. We also thank Shingo Kodama from Middlebury College for his help in validating these results.


\clearpage

{
    \small
    \bibliographystyle{ieeenat_fullname}
    \bibliography{main}
}

\clearpage
\setcounter{page}{1}
\maketitlesupplementary



\begin{figure}[t!]
    \centering
    \includegraphics[width=\columnwidth]{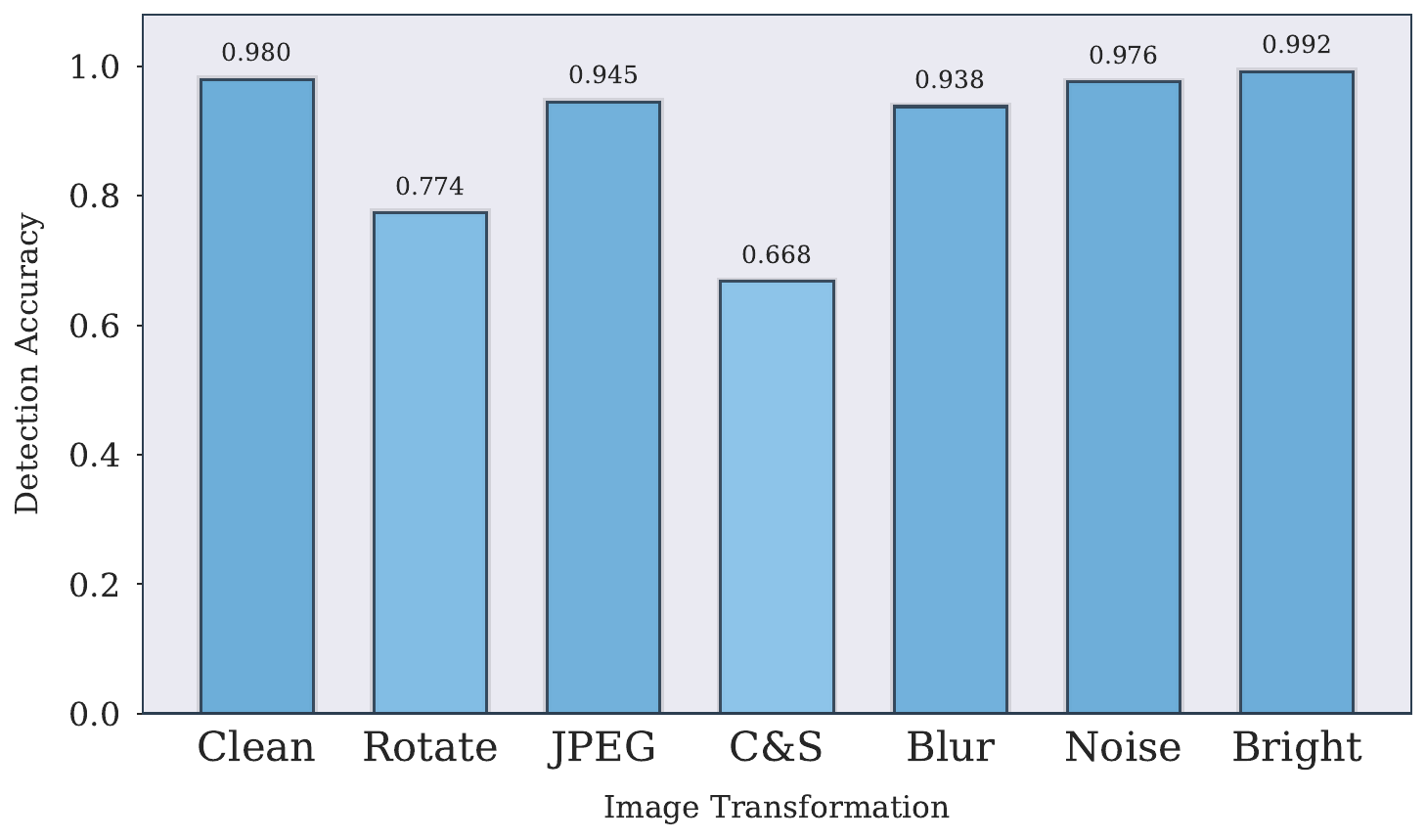}   
    \caption{\textit{\textbf{Robustness of Watermark Detection Against Image Transformations.} ROC-AUC of \methodname~under various image transformations, computed using the Spatial Test detector (\Cref{par:spatial_test_detection}).}}
    \label{fig:image-transformations}
\end{figure}

\begin{table*}[h!]
\centering
\caption{\textit{\textbf{Robustness of Steganalysis-Based Removal.} Comparison of performance metrics (ROC-AUC) under various levels of averaging.}}
\label{tab:steganalysis}
\begin{tabular}{lcccccccccc}
\toprule
Method  & 5 & 10 & 20 & 50 & 100 & 200 & 500 & 1000 & 2000 & 5000 \\
\midrule
Tree-Ring (AUC) & 0.293 & 0.267 & 0.314 & 0.275 & 0.214 & 0.228 & 0.211 & 0.224 & 0.224 & 0.241 \\
WIND (AUC) &  \textbf{1.000}
 & \textbf{1.000} & \textbf{1.000} & \textbf{1.000} & \textbf{1.000} & \textbf{1.000} & \textbf{1.000} & \textbf{1.000} & \textbf{1.000} & \textbf{1.000} \\
\methodname\ (AUC) & \textbf{1.000}
 & \textbf{1.000} & \textbf{1.000} & \textbf{1.000} & \textbf{1.000} & \textbf{1.000} & \textbf{1.000} & \textbf{1.000} & \textbf{1.000} & \textbf{1.000} \\
\bottomrule
\end{tabular}
\end{table*}

\section{Additional Related Works}
\label{app:related_works}
\paragraph{Post-Processing Methods.}

Post-processing techniques embed watermarks after the image generation stage, providing model-agnostic flexibility at the cost of potential quality degradation. Frequency-domain methods, such as methods using the Discrete Wavelet Transform (DWT) and Discrete Cosine Transform (DCT) \citep{navas2008dwt, al2007combined}, embed watermarks in the transformed domains and offer robustness against operations like resizing and translation. Complementing these, deep encoder-decoder frameworks such as HiDDeN \cite{zhu2018hidden} and StegaStamp \cite{tancik2020stegastamp} utilize end-to-end neural training for watermark embedding and extraction. Despite these advancements, however, these methods are vulnerable to regeneration attacks \cite{zhao2025invisible}. 
Alternative strategies operating in latent spaces have also been proposed \cite{fernandez2022watermarking}, though they also remain susceptible to sophisticated removal attacks. 

\section{Proof of Lemma \ref{lemma:detect}}

\begin{proof}[Proof of Lemma \ref{lemma:detect}]
The angle between the original semantic vector $\mathbf{v}$ used to generate the watermark and extracted semantic vector $\tilde{\mathbf{v}}$ of the suspect image is
\begin{align}
\nonumber
\theta(\mathbf{v}, \tilde{\mathbf{v}}) = \cos^{-1}\left(\frac{\langle \mathbf{v}, \tilde{\mathbf{v}}\rangle}{\| \mathbf{v}\|_2 \| \tilde{\mathbf{v}}\|_2}\right) \in [-90\degree, 90\degree].
\end{align}

By the property of SimHash\footnote{See Section 3 of \cite{charikar2002similarity} for details on why
\begin{align}
\label{eq:patch_angle}
\Pr_{\mathbf{r} \sim \mathcal{N}(\mathbf{0}, \mathbf{I})} (\sign(\langle\mathbf{v}, \mathbf{r}\rangle)~=~ \sign(\langle\tilde{\mathbf{v}},\mathbf{r}\rangle))~=~1-\frac{\theta}{180\degree}.
\end{align}} and Assumption \ref{ass:patch}, the probability\footnote{Technically, there is an additional chance of a random collision but, given the size of modern cryptographic hash functions like SHA2, we assume this probability is negligible.} that the $i$th patch aligns is
\begin{align}
    \nonumber
    \rho(\theta) := \Pr(\| \mathbf{z}_i - \tilde{\mathbf{z}}_i \|_2 \leq \tau)    
    = \left(1- \frac{\theta}{180\degree}\right)^b.
\end{align}
Since each SimHash instance is independent, the number of matches $m$ is distributed like a Binomial with $n$ trials and success probability $\rho(\theta)$.
During watermark detection, we count the number of patches that match, declaring an image watermarked if the number of matches exceeds the threshold $m^\text{match}$.
Setting $m^\text{match} = \lfloor n \rho(\theta^\text{mid})\rfloor$ yields the lemma statement.
\end{proof}

\section{Implementation Details}

\subsection{Key Parameters} Unless otherwise stated, the results are reported with the following parameters: number of patch matching threshold $n_{match} = 12$; patch-wise matching threshold $\tau = 2.3$; number of projection per noise patch: $b = 7$; number of noise patches $k = 1024$. All parameters were chosen to optimize the overall performance.

\subsection{Spatial Test}
\label{app:spatial_test}
To better analyze potential image tampering, we examine the structural organization of high-intensity regions in the patch correspondence heatmaps (see \Cref{sec:method}, Tampering Detection). Specifically, we threshold the heatmap data at the 80th percentile and identify connected components. The extracted parameter, the number of distinct clusters detected at this threshold, provides insight into the fragmentation of high-intensity regions. A higher number of clusters indicates a more dispersed distribution, while a lower number suggests more contiguous structures, which may be indicative of image tampering.

\subsection{Transformations for the Removal Attack}
\label{app:transformation}

We use a standard suit of transformations,  including a $75^\circ$ rotation, $25\%$ JPEG compression, $75\%$ random cropping and scaling (C \& S), Gaussian blur using an $8 \times 8$ filter, Gaussian noise with $\sigma = 0.1$, and color jitter with a brightness factor uniformly sampled between 0 and 6.

\subsection{Embedding Model Fine Tuning Process}
\label{app:embedding_opt}

\raggedright
\paragraph{Source Prompts and Caption Pairing.}
Prompts were sampled from MS-COCO and the Stable Diffusion Prompt Dataset (\texttt{Gustavosta/Stable-Diffusion-Prompts}). Images were generated using Stable Diffusion v2.1 (\texttt{stabilityai/stable-diffusion-2-1-base}), then captioned with BLIP-2 (\texttt{Salesforce/blip2-flan-t5-xl}). A regeneration loop used each caption as a prompt to generate a second image, which was again captioned, yielding semantically aligned $(\text{caption}_1, \text{caption}_2)$ pairs. Unrelated pairs were constructed by randomly mismatching captions. This procedure yielded 10{,}000 caption pairs.

\paragraph{Fine-Tuning.}
A SentenceTransformer model (\texttt{paraphrase-Mpnet-base-v2}) was fine-tuned using \texttt{MultipleNegativesRankingLoss}. Training was performed for 140 epochs with a batch size of 64 and 10\% warmup steps, using the AdamW optimizer with default settings.

We provide the full implementation details of the fine-tuning process to support reproducibility and enable further research.\footnote{Our fine-tuned caption embedding model is publicly available at \url{https://huggingface.co/kasraarabi/finetuned-caption-embedding}}.

\begin{figure}[t!]
   \centering
       \includegraphics[width=\linewidth]{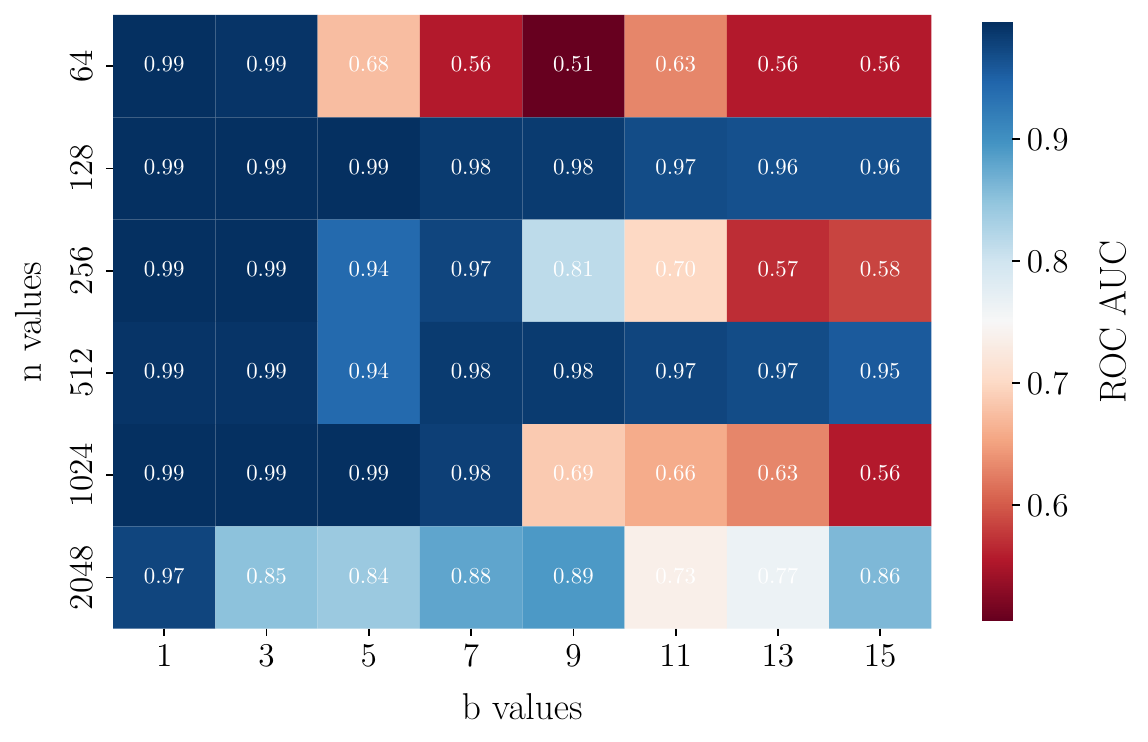}
   \caption{\textbf{\textit{Ablation Study of the Number of Patches ($\boldsymbol{n}$) and Bits ($\boldsymbol{b}$) on Watermark Detection Performance.}}}
   \label{fig:kb-ablation}
\end{figure}

\section{Ablation of Number of Patches and Bits}
\label{sec:abiliation}
To investigate the impact of the number of patches ($n$) and the number of bits ($b$) used to generate the initial noise, we conducted an exhaustive ablation study across various parameter combinations. The results are presented in \Cref{fig:kb-ablation}.

\section{Resilience to Latent Forgery Attacks}
\label{sec:latent_forgery_attacks}
We evaluate SEAL under the Latent Forgery Attack~\cite{jain2025forging}. This attack aims to adversarially perturb non-watermarked images such that they appear watermarked by mimicking the latent representation of an originally watermarked image. This type of attacks assumes access to at least one watermarked image and attempts to shift unrelated images into the watermarked image latent region \cite{muller2025black}.

Our experiments, conducted on 100 images, demonstrated complete robustness against such attacks. Due to its semantic binding, our watermark is closely entangled with the high-level content of the original image (used by the attacker). Therefore, it is unlikely that the watermark can be transferred to images featuring unrelated content. Beyond this, the task of forging the latent patches is itself highly nontrivial. Yet, aiming to forge noise patches that better align semantically with the original watermarked image (used during the attack) might yield greater success. We leave this direction for future research.

\section{Additional Limitations and Discussion}
\label{app:additional_limit}
\paragraph{Distortion-Free Property for Sets of Images}
Our watermarking scheme securely generates the noise for each patch from a normal distribution, ensuring that each individual noise is distributed from a normal distribution. However, multiple watermarked images corresponding to related prompts may leak information about the noise i.e., the noise in some patches will match while the noise in other patches does not. This leakage arises from our design choice to make similar prompts produce similar watermarks, a feature that enhances consistency but comes at the cost of some information exposure.

In contrast, some prior works do not exhibit this property and instead maintain a stronger sense of distribution-free randomness. Ignoring cases where the exact same noise is reused, such as when multiple images are generated by the same user in \cite{yang2024gaussian}, these methods ensure that each image is independently sampled from a normal distribution. This fundamental difference highlights a trade-off between ensuring independent randomness and enabling structured watermark consistency across related prompts. A user concerned about the distortion-free property for sets may vary the secret salt for different generations. This will allow the user to enjoy the best of both worlds, At the cost of searching through possible salts that may have been used during detection time.

\paragraph{Further Possible Improvement.} We made an initial attempt to find a semantic vector that is both known before generation and recoverable from the generated image. Yet, we believe this is a promising direction for future research. Improved semantic embedding methods, as well as approaches that jointly optimize image generation and semantic descriptor generation, could enhance the correspondence between the embedded watermark and the image's semantics. Such advancements may enable much stricter bounds on detecting when a watermarked image has been tampered with and how.

\paragraph{Watermarking Without a Proxy Image.} To watermark an image directly with SimHash, we may embed the noise
via post hoc diffusion inpainting (see Section 4.3 of \cite{arabi2024hidden}), at the
expense of a modest quality decrease; SSIM = 0.768. Alternatively, one may optimize an embedding of the prompt to correlate well between a givan prompt and the caption of the resulting image (similarly to \Cref{fig:embedding-ablation}).

\paragraph{Reliance on DDIM inversion.} While diffusion models are not accurately mapped back to the initial noise used to generate them, our method is based on an empirical observation: patches with small enough $\ell_2$  differences are almost always generated from the same seed, suggesting consistent behavior under approximate inversion. The $\ell_2$ distance, as a metric used to determine whether a reconstructed noise patch matches the original, yields over $99.9\%$ ROC-AUC.

\paragraph{Prompt Inversion Attack.} An attacker may choose to use prompt inversion methods such as~\cite{mahajan2024prompting}. When an attacker approximates the prompt, our semantic safeguard may be less effective. Yet, forgery attacks often
aim to harm the owner’s reputation. Our method ensures that even if a forged image is produced, its semantics remain somewhat close to the original watermarked image.

\paragraph{Attacking the Confusion Band.} As shown in \Cref{fig:embedding-ablation}, at certain levels of semantic similarity, our method may confuse images that are related or unrelated to the embedded caption. However, a forger’s ability to exploit this ambiguity is inherently constrained by the semantics of the original watermarked images — They can only forge images that are sufficiently similar to those whose watermark they supposedly managed to replicate. This limitation could potentially be mitigated in the future through improved embedding models.

\paragraph{Watermarking Capacity.} The ability to encode a very large number of distinct watermarks, or to identify one user out of many millions possible watermark owners, may raise concerns. As previous work has shown, noise-based watermarking methods can support millions of different users by using different hash salts, while still ensuring that the key patterns remain distinguishable \cite{gunn2024undetectable, arabi2024hidden}.

\begin{figure}[h]
    \centering
    \includegraphics[width=0.7\columnwidth]{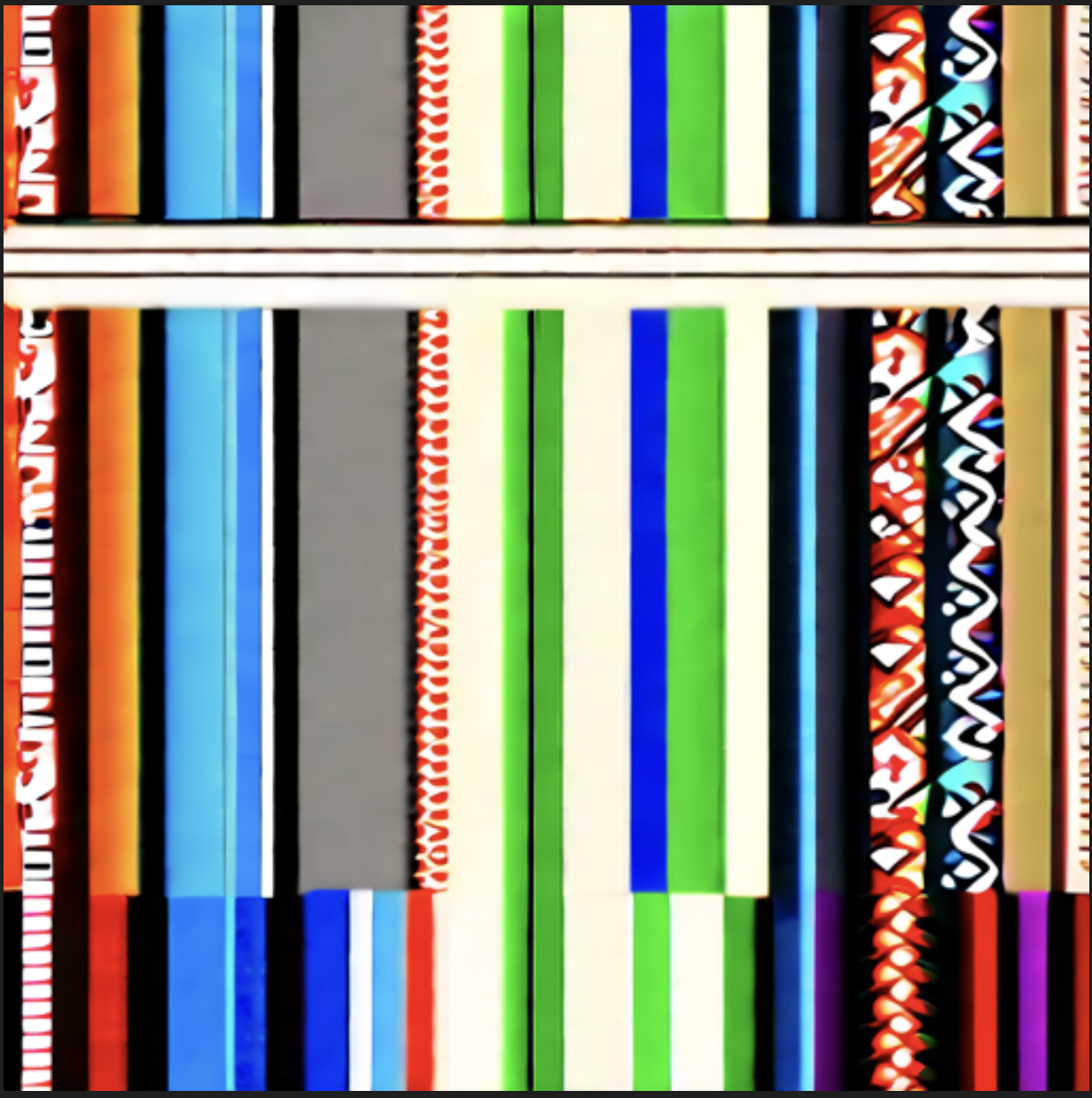}   
    \caption{\textit{\textbf{Impact of Repetitive Patches in the Initial Noise on Image Generation.}}}
    \label{fig:carpet}
\end{figure}

\section{Additional Experiments}

\subsection{CatAttack Performance vs. Object Scale}
We varied the size of the pasted object in the CatAttack from 10\% to 40\% of the image area and evaluated detection performance at each scale. Table~\ref{tab:cat_attack_app} reports the ROC-AUC (\%) for each object scale, showing a gradual improvement from 95.4\% at a 10\% scale to 98.0\% at a 40\% scale.

\begin{table}[h]
\centering
\caption{\textit{\textbf{CatAttack Detection Performance vs. Object Scale.}} ROC-AUC (\%) for different object sizes, as detected by SEAL + Spatial Test.}
\label{tab:cat_attack_app}
\small
\begin{tabular}{lcccc}
\toprule
Scale (\% of image) & 10 & 20 & 30 & 40 \\
\midrule
ROC-AUC (\%)        & 95.4 & 96.2 & 97.7 & 98.0 \\
\bottomrule
\end{tabular}
\end{table}

\subsection{Effect of Insertions on Semantic Embeddings and LSH Binning}
\label{app:bin_switch}
We evaluate how localized insertions (e.g., “cat”, “house”, “human”) impact the semantic embedding vector and its SimHash bin assignments. Table~\ref{tab:insert_analysis} reports the average angular shift $\Delta \theta$ between the original and edited embeddings, along with the proportion of hash bins that flip due to each insertion. The results show that even modest insertions can produce substantial rotations in semantic space (up to $71.2\degree$) and high bin-flip ratios (exceeding $90\%$), underscoring the sensitivity of LSH-based watermark detection to semantic changes. The experimental details are similar to those of \textit{Cat Attack} in \Cref{sec:experiments}.

\begin{table}[h]
    \centering
    \caption{\textit{\textbf{Insertion Type Analysis.} The angle between the original and edited semantic embeddings, and the ratio of changed bins.}}
    \small
\begin{tabular}{lccc}
\toprule
Insert Type & Cat & House & Human \\
\midrule
Angle ($^\circ$) & $71.2 \pm 13.8$ & $60.3 \pm 19.7$ & $68.8 \pm 17.3$ \\
Flip Ratio & $96\% \pm 4\%$ & $90\% \pm 8\%$ & $94\% \pm 5\%$ \\
\bottomrule
\end{tabular}
\label{tab:insert_analysis}
\end{table}

\subsection{Robustness under Regeneration Attack}
\label{app:regenration_attacks}
We benchmark SEAL against a range of detection approaches, including both generation-time and post-hoc methods such as HiDDeN~\cite{zhu2018hidden}, Stable Signature~\cite{fernandez2023stable}, TrustMark~\cite{bui2023trustmarkuniversalwatermarkingarbitrary}, and WOUAF~\cite{kim2024wouafweightmodulationuser}. As shown in Table~\ref{tab:regen-robustness}, SEAL maintains a 98\% detection accuracy under a regeneration attack~\cite{zhao2025invisible}, outperforming prior methods.

\begin{table}[htbp]
  \centering
  \caption{Robustness of watermarking methods under regeneration-based removal attacks \cite{zhao2025invisible}. We report the watermark detection accuracy on regenerated (attacked) images (\%).}
  \label{tab:regen-robustness}
  \scriptsize
  \begin{tabular}{lccccc}
    \toprule
    Method      & HiDDeN & StableSig & TrustMark & WOUAF & SEAL \\
    \midrule
    Acc.\ (\%)  & 47     & 41        & 5         & 51    & \textbf{98}   \\
    \bottomrule
  \end{tabular}
\end{table}

\subsection{Proxy vs.\ Generated Image Comparison}
In~\Cref{fig:proxy_vs_actual}, we present a visual comparison between the proxy image \(x_{\text{pre}}\), which guides the watermark placement, and the final generated image \(x\), illustrating SEAL’s ability to preserve semantic intent during watermarked generation.

\begin{figure}[t]
    \centering
    \begin{subfigure}[b]{0.24\linewidth}
      \includegraphics[width=\linewidth]{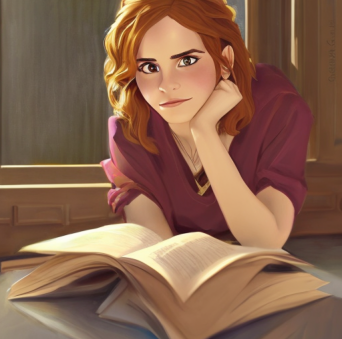}
    \end{subfigure}
    \hfill
    \begin{subfigure}[b]{0.24\linewidth}
      \includegraphics[width=\linewidth]{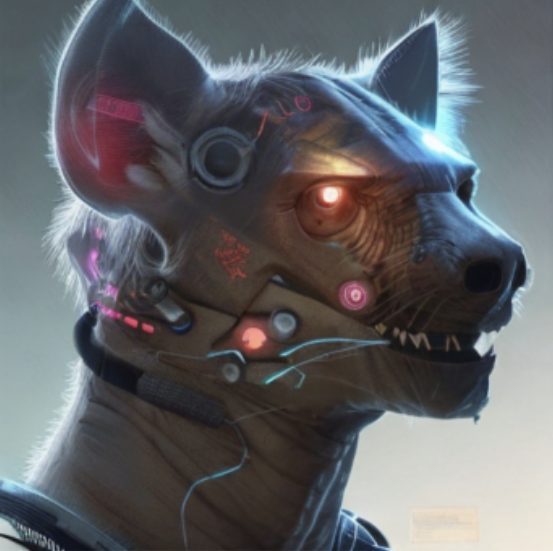}
    \end{subfigure}
    \hfill
    \begin{subfigure}[b]{0.24\linewidth}
      \includegraphics[width=\linewidth]{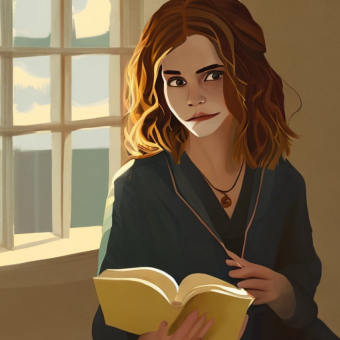}
    \end{subfigure}
    \hfill
    \begin{subfigure}[b]{0.24\linewidth}
      \includegraphics[width=\linewidth]{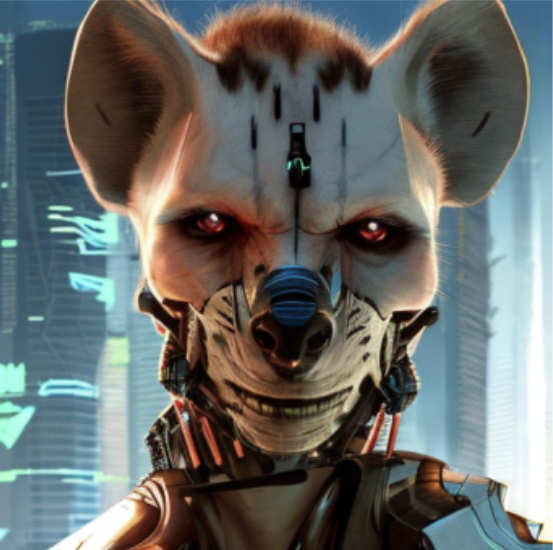}
    \end{subfigure}
    \caption{Comparison between the proxy image \(x_{\text{pre}}\) (left two) and the final generated image \(x\) (right two). Zoom in for clarity.}
    \label{fig:proxy_vs_actual}
\end{figure}



\begin{table*}[t!]
\centering
\caption{\textit{\textbf{CLIP Score Evaluation.} Comparison of CLIP scores before and after watermarking for images generated using prompts from the Stable-Diffusion-Prompts~\cite{gustavosta2024pro} and COCO~\cite{lin2014microsoft} dataset.}}
\begin{tabular}{cccc}
\toprule
\multicolumn{2}{c}{Stable-Diffusion-Prompts} & \multicolumn{2}{c}{COCO} \\
\cmidrule(lr){1-2} \cmidrule(lr){3-4}
CLIP (before) & CLIP (after) & CLIP (before) & CLIP (after) \\
\midrule
32.378 & 32.401 & 31.365 & 31.499 \\
\bottomrule
\end{tabular}
\label{tab:clip_results}
\end{table*}

\begin{figure*}[h]
  \centering
  \includegraphics[width=0.8\textwidth]{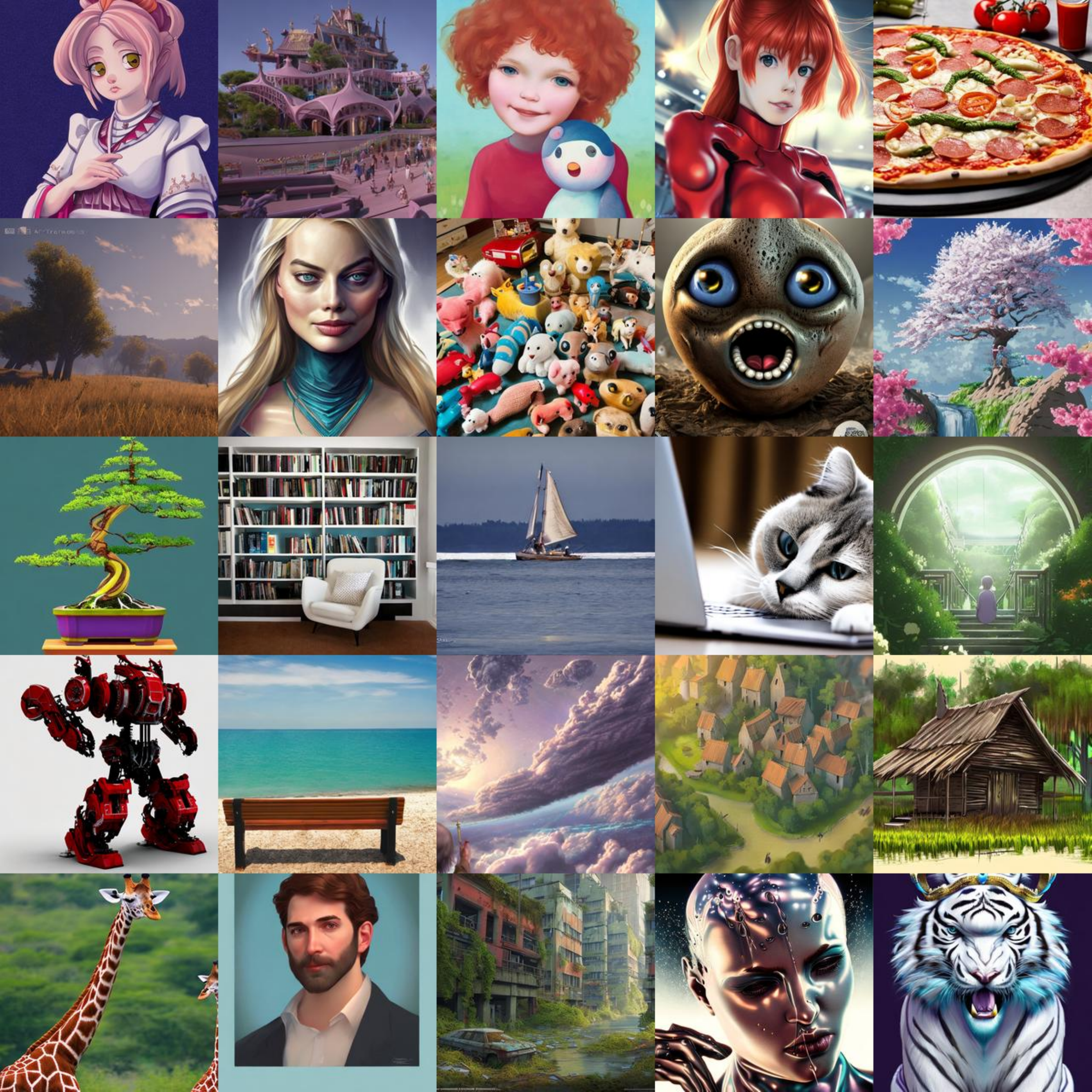}
  \caption{\textbf{\textit{Watermarked images generated using \methodname.}}}
  \label{fig:generated_images}
\end{figure*}

\end{document}